\documentclass[11pt]{article}

\usepackage[preprint]{acl}

\usepackage[T1]{fontenc}
\usepackage[utf8]{inputenc}
\usepackage{times}
\usepackage{latexsym}
\usepackage{microtype}
\usepackage{amsmath,amssymb,amsthm,mathtools}
\usepackage{booktabs}
\usepackage{multirow}
\usepackage{array}
\usepackage{tabularx}
\usepackage{graphicx}
\usepackage{xcolor}
\usepackage{algorithm}
\usepackage{algorithmic}
\usepackage{tikz}
\usetikzlibrary{positioning,arrows.meta,calc}
\usepackage[nameinlink,capitalize]{cleveref}

\newcommand{\vocab}{\mathcal{V}}
\newcommand{\docs}{\mathcal{D}}
\newcommand{\simplex}{\Delta_V}
\newcommand{\R}{\mathbb{R}}

\newcommand{\KL}{\mathrm{KL}}
\newcommand{\JS}{\mathrm{JS}}

\newcommand{\logsumexp}{\mathrm{logsumexp}}
\newcommand{\pfull}{p_{\mathrm{full}}}
\newcommand{\pret}{p_{\mathrm{ret}}}

\newcommand{\qgd}{q_{\mathrm{GD}}}

\newcommand{\ci}[1]{$_{\pm #1}$}
\newcommand{\prett}{p_{\mathrm{ret},t}}
\newcommand{\pfullat}[1]{p_{\mathrm{full},#1}}

\newtheorem{theorem}{Theorem}

\title{Grounded Decoding: Retrieval-Anchored Probability Fusion for Faithful RAG}

\author{
\textbf{Ibne Farabi Shihab}\textsuperscript{1}
\and
\textbf{Fariya Afrin}\textsuperscript{2}
\and
\textbf{Sanjeda Akter}\textsuperscript{1}
\and
\textbf{Anuj Sharma}\textsuperscript{3}
\\[2pt]
\textsuperscript{1}Department of Computer Science, Iowa State University \\
\textsuperscript{2}Department of Computer Science, Kalinga Institute of Industrial Technology \\
\textsuperscript{3}Department of Civil, Construction \& Environmental Engineering, Iowa State University \\
\texttt{ishihab@iastate.edu}
}

\date{}

\begin{document}
\maketitle

\begin{abstract}
As retrieval-augmented generation (RAG) systems scale, it becomes increasingly challenging to ensure faithful grounding in external evidence. Large language models may still prioritize parametric knowledge over retrieved information when conflicts arise. We propose a novel training-free decoding framework, \emph{Grounded Decoding}, designed to improve factual consistency in RAG without modifying model parameters.
Unlike standard approaches that rely on a single conditional distribution, our method constructs two matched-prompt distributions at every generation step: (1) a full RAG distribution conditioned on the query, retrieved documents, and generated prefix, and (2) a retrieval-only distribution conditioned solely on retrieved evidence and the same prefix. The final next-token distribution is derived as the unique solution to a KL-barycenter objective over the probability simplex, yielding a normalized geometric fusion of the two distributions.This formulation naturally recovers standard RAG when the grounding weight is zero and smoothly shifts probability mass toward retrieved evidence as grounding strength increases. We further introduce a conflict-aware adaptive weighting scheme that dynamically adjusts grounding based on distributional disagreement and retriever confidence.
Experiments on ALCE, Natural Questions, and FActScore demonstrate consistent improvements in factual accuracy and citation quality over standard RAG and competitive decoding-time baselines, while maintaining fluency. Our results indicate that probability-level fusion provides a strong and efficient alternative to logit-level intervention methods for faithful RAG decoding.
\end{abstract}

\section{Introduction}
\label{sec:introduction}

Retrieval-augmented generation (RAG) has become a standard mechanism for connecting large language models to external knowledge sources \citep{lewis2020retrieval,guu2020realm,borgeaud2022improving}, yet these systems still struggle to ensure faithful grounding in retrieved evidence. While retrievers supply passages that can be inspected and cited, and generators transform them into fluent responses, this separation does not guarantee that generated outputs remain fully supported by the provided context. The generator may still rely on parametric knowledge, ignore relevant evidence, or produce statements that are plausible but not grounded in the retrieved documents \citep{ji2023survey,huang2025survey}. These limitations are particularly evident in long-form question answering and biography generation, where responses contain multiple atomic facts and only a subset is explicitly supported by the retrieved corpus \citep{gao2023enabling,min2023factscore}.

Recent work has explored decoding-time interventions as a lightweight alternative to retraining large language models. Context-Aware Decoding (CAD) introduces a contrast between context-conditioned and context-free distributions, amplifying tokens that depend on external evidence \citep{shi2024trusting}. Several adaptive variants, including AdaCAD, COIECD, and CoCoA, further refine this mechanism by adjusting contrast strength based on contextual reliability \citep{wang2025adacad,yuan2024coiecd,khandelwal2025cocoa}. While these approaches are effective in practice, they are typically formulated as logit-level manipulations, which makes the induced probability distribution difficult to interpret and complicates analysis of boundary behavior and implementation under modern caching-based inference systems.

We revisit grounded decoding for RAG from the perspective of probability fusion on the probability simplex. Instead of performing logit subtraction or contrastive scaling, we construct a full RAG distribution conditioned on the query, retrieved documents, and prefix, together with a retrieval-only distribution computed under a matched prompt. The resulting decoding objective is formulated as a KL barycenter between the two distributions, which admits a closed-form solution as a normalized geometric mixture (logarithmic opinion pool). This formulation provides a clean interpretation: the method recovers standard RAG when the grounding weight is zero, and smoothly interpolates toward the retrieval-only distribution as grounding strength increases. Unlike linear contrastive formulations in logit space, this objective avoids double-counting effects and yields stable behavior across limiting regimes.

Beyond this formulation, several practical considerations arise in implementing grounded decoding. The retrieval-only stream must be evaluated using the same autoregressive prefix as the full RAG stream, which requires a synchronized dual-stream decoding process. As a result, both streams evolve token-by-token and maintain separate key-value caches rather than relying on precomputed retrieval representations. This structure is essential for correctness but is often omitted in prior contrastive decoding formulations.

We further introduce an adaptive grounding mechanism that dynamically adjusts the fusion strength based on distributional disagreement and retriever confidence. Specifically, the grounding weight increases when the Jensen--Shannon divergence between the full and retrieval-only distributions is high, indicating conflict between parametric and external evidence, and decreases when retrieval confidence is low to prevent over-reliance on noisy evidence. This design couples the optimization objective with the decoding dynamics, enabling stronger grounding on factual tokens while preserving fluency in low-confidence regions.

We evaluate Grounded Decoding against standard RAG, CAD, adaptive CAD variants, DoLa, and kNN-LM under matched decoding conditions. Experiments on ALCE, Natural Questions, and FActScore demonstrate consistent improvements in factuality and citation quality while maintaining competitive fluency. Ablation studies show that the retrieval-only distribution is the primary driver of gains, that the KL barycenter formulation consistently outperforms linear mixtures, and that the adaptive grounding mechanism is predominantly activated on factual spans rather than uniformly across generated outputs.

Our contributions include:
\begin{itemize}
    \item A probability-fusion framework for RAG decoding based on a KL barycenter formulation
    \item A closed-form normalized geometric mixture between full RAG and retrieval-only distributions
    \item A synchronized dual-stream decoding mechanism for consistent autoregressive evaluation
    \item An adaptive grounding strategy based on distributional disagreement and retriever confidence
    \item Extensive empirical validation showing improved factuality and citation quality across benchmarks
\end{itemize}
Our findings reveal that probability-level fusion provides a principled mechanism for improving faithfulness in RAG decoding, with retrieval-only grounding playing a more critical role than full-context generation in resolving factual inconsistencies. These results improve the reliability of grounded generation and deepen our understanding of probabilistic evidence integration in large language model decoding.

\section{Related Work}
\label{sec:related}

Retrieval-augmented generation (RAG) connects parametric language models with non-parametric memory or external documents \citep{lewis2020retrieval,guu2020realm,borgeaud2022improving}. In open-domain question answering and long-form generation, factual correctness alone is insufficient, as outputs must be grounded in the provided evidence. This has led to increased emphasis on attribution and citation quality as key evaluation dimensions \citep{gao2023enabling,min2023factscore,kwiatkowski2019natural}.

\paragraph{Training-time and Decoding-time Methods}
Training-time approaches address this issue through instruction tuning, preference optimization, and self-refinement \citep{ouyang2022training,rafailov2023direct,asai2024selfrag}. While effective, these methods require additional data and training compute and are less flexible once deployed. Decoding-time methods provide a complementary alternative, as they operate on frozen models and can be integrated with improved retrievers or verification modules.

\paragraph{Context-Aware Decoding}
Within this family, Context-Aware Decoding (CAD) introduces a contrastive mechanism that amplifies tokens whose probabilities depend on the provided context \citep{shi2024trusting}. Adaptive variants such as AdaCAD, COIECD, and CoCoA extend this idea by modeling context uncertainty or model disagreement and adjusting the contrastive strength dynamically \citep{wang2025adacad,yuan2024coiecd,khandelwal2025cocoa}. However, these methods are typically implemented at the logit level, which makes the induced probability distributions less explicit and limits direct interpretability.

A more comprehensive discussion of Grounded Decoding, Decoding-Time Interventions, Controllable Generation Methods, and Retrieval-Grounding approaches is provided in Appendix~\ref{app:related_work}.
\section{Methodology}
\subsection{Problem Formulation and Background}
\label{sec:background}
Let \(x\) denote a query and \(\docs=\{d_1,\ldots,d_m\}\) the documents returned by a retriever. A RAG model generates \(y=(y_1,\ldots,y_T)\) autoregressively from a prompt containing both \(x\) and \(\docs\). At decoding step \(t\), the generated prefix is \(y_{<t}\), the vocabulary is \(\vocab\) with \(|\vocab|=V\), and the probability simplex is
\begin{equation}
\simplex = \{q\in\R^V : q_i\geq 0,\ \sum_{i=1}^V q_i=1\}.
\end{equation}

The standard full RAG distribution is defined as
\begin{equation}
{\pfull}_t(i)=p_\theta(i\mid x,\docs,y_{<t}),
\label{eq:pfull}
\end{equation}
which serves as the base distribution for conventional decoding strategies, including greedy decoding, nucleus sampling, and beam search, prior to any grounding-specific modification.

The primary limitation of this distribution is that it is not explicitly constrained to be evidence-aware. Although retrieval provides access to relevant documents, it does not enforce alignment between next-token probabilities and the retrieved evidence. As a result, the model may default to memorized parametric knowledge, interpolate across inconsistent passages, or produce connective statements that are not supported by the evidence. This issue becomes particularly pronounced in knowledge-conflict settings, where retrieved documents may deliberately contradict the model’s parametric memory \citep{longpre2021entity}. A faithful decoder should therefore respond to such conflicts in a local and context-sensitive manner, rather than uniformly modifying all tokens, by increasing reliance on external evidence when disagreement arises between the full RAG distribution and an evidence-conditioned reference.

This motivation has led to a family of contrastive decoding approaches. Context-Aware Decoding (CAD) introduces a logit-level contrast between a full-context model and a context-removed variant \citep{shi2024trusting}. In log-probability form, this can be expressed as
\begin{align}
\log q_{\mathrm{CAD},t}(i) \propto{} &(1+\beta)\log p_\theta(i\mid x,\docs,y_{<t}) \notag\\
&{}-\beta\log p_\theta(i\mid x,y_{<t}),
\label{eq:cad}
\end{align}
where \(\beta\) controls the degree of contextual amplification. Subsequent adaptive variants replace the fixed weighting parameter with signals derived from distributional disagreement or uncertainty estimates \citep{wang2025adacad,yuan2024coiecd,khandelwal2025cocoa}. While effective, these methods operate primarily at the logit level, which obscures the resulting probability distribution and complicates direct interpretation.

Grounded Decoding belongs to the same general class of contrastive methods and can similarly be implemented as a post-hoc transformation of the model’s output distribution. However, it is not a direct reformulation of CAD. In contrast to CAD, which suppresses tokens that are likely in the absence of context, Grounded Decoding explicitly promotes tokens that remain probable under the retrieved documents themselves. This distinction leads to different optimization behavior and empirical effects, particularly in settings requiring fine-grained evidence alignment.

A useful conceptual perspective arises from recent work framing decoding as optimization over the probability simplex \(\simplex\), where regularization terms encode preferences such as entropy control, sparsity, or truncation behavior \citep{ji2026decoding}. From this viewpoint, a central question in faithful RAG decoding is the choice of reference distribution and the corresponding regularizer that governs boundary behavior. The following section addresses this question by introducing a per-token retrieval reference and formulating decoding as a KL-barycenter problem.

\begin{figure*}[t]
\centering
\includegraphics[width=0.9\textwidth]{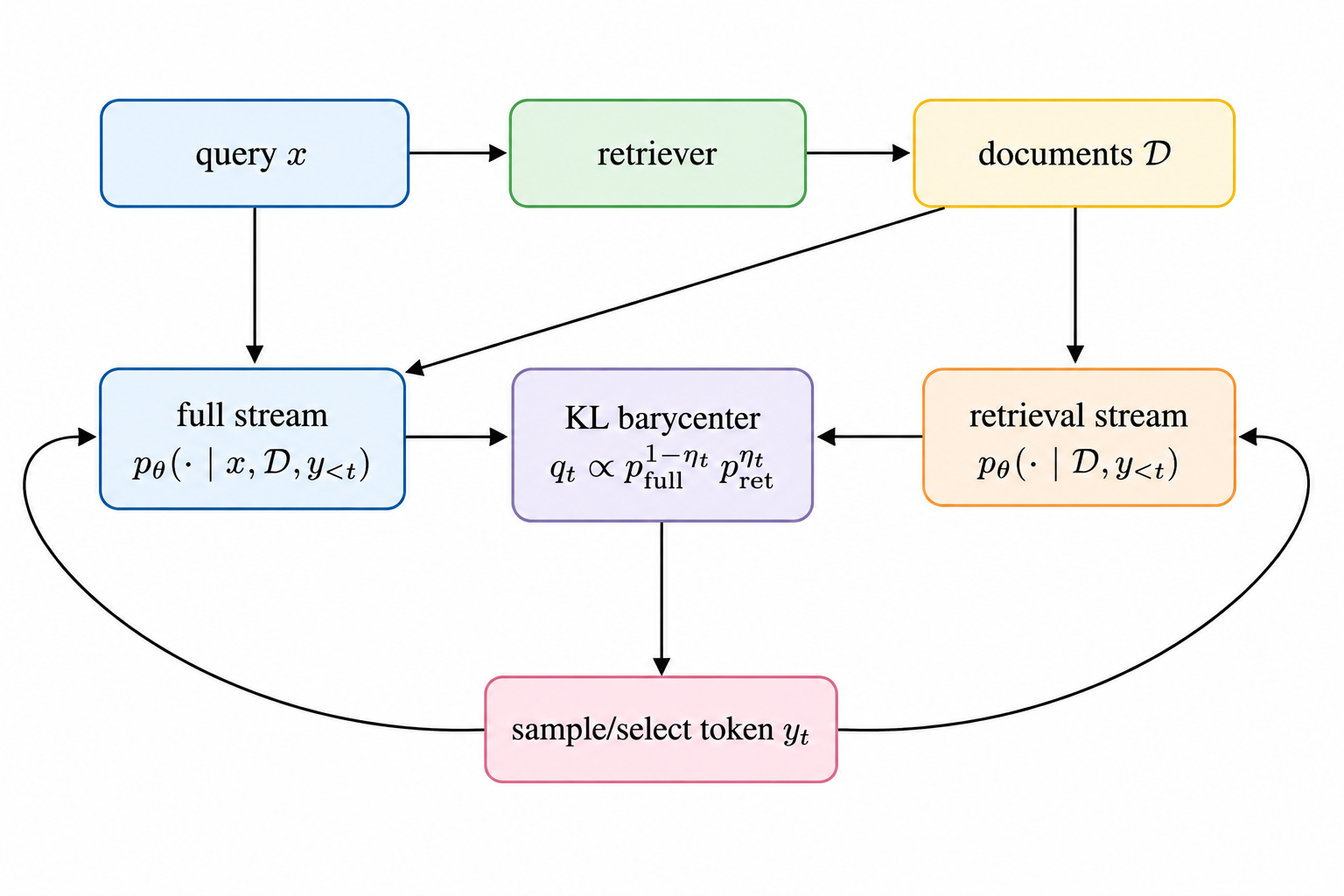}
\caption{Grounded Decoding maintains two matched decoding streams. The full stream conditions on the query and retrieved documents, while the retrieval stream conditions only on the retrieved documents and the same generated prefix. The final fused distribution is computed as a KL barycenter with an adaptive weighting scheme.}
\label{fig:grounded_decoding}
\end{figure*}
\subsection{Grounded Decoding}
\label{sec:method}

\Cref{fig:grounded_decoding} summarizes the proposed dual-stream decoding framework, including the full RAG stream, the retrieval-only stream, and the KL-barycenter fusion step.
\paragraph{A per-token retrieval reference}
\label{sec:retrieval_reference}

We define a retrieval-conditioned reference distribution using the same language model and the same generated prefix as the full RAG stream, but under a prompt that includes only the retrieved documents and omits the user query:
\begin{equation}
{\pret}_t(i)=p_\theta(i\mid \docs,y_{<t}).
\label{eq:pret}
\end{equation}

Since this distribution is computed using the model’s own output head, it remains dense over the full vocabulary, is fully compatible with the model’s tokenization scheme, and avoids calibration inconsistencies that may arise when combining heterogeneous probability sources. Importantly, the dependence on \(y_{<t}\) is essential: the distribution \(\prett\) varies with the generated prefix, and therefore precomputing \(p_\theta(\cdot\mid \docs)\) independently of generation is not valid.

In practice, the document component of the prompt can be prefetched into a retrieval-stream key-value cache; however, each generated token still requires a forward pass conditioned on this cache.

A consistent prompt formulation is also critical. For instruction-tuned models, the full and retrieval streams must share the same chat template, system message, document formatting, and assistant prefix, differing only in the inclusion or exclusion of the query. Otherwise, discrepancies between \({\pfull}_t\) and \({\pret}_t\) may reflect prompt-format artifacts rather than genuine evidence sensitivity, leading to an incorrect characterization of distributional differences. \Cref{app:prompts} provides the exact prompt templates used in our implementation, and \Cref{fig:prompt_template_comparison} visualizes the controlled difference between the full and retrieval-only streams.

\paragraph{KL barycenter objective and closed form}
\label{sec:barycenter}

We aim to construct a fused next-token distribution that remains faithful to the full RAG distribution while incorporating constraints from retrieved evidence. In particular, the decoder should preserve standard query-conditioned behavior while simultaneously assigning higher probability mass to tokens supported by external documents. This trade-off can be naturally formulated as a KL barycenter. For a grounding weight \(\rho_t \geq 0\), Grounded Decoding is defined as:
\begin{equation}
{\qgd}_t = \arg\min_{q\in\simplex}
\left[\KL(q\,\|\,{\pfull}_t)+\rho_t\KL(q\,\|\,{\pret}_t)\right].
\label{eq:main_objective}
\end{equation}

The first term penalizes departure from standard RAG; the second term pulls the distribution toward tokens that remain likely under the retrieved evidence. The scalar \(\rho_t\) controls how much retrieval anchoring is applied at this token, and we will allow it to vary with the input and the decoding state in \cref{sec:adaptive}.

The objective admits a unique closed-form solution that is straightforward to compute.

\begin{theorem}[Closed-form grounded distribution]
\label{thm:closed_form}
Assume \({\pfull}_t(i)>0\) and \({\pret}_t(i)>0\) for all \(i\in\vocab\), and let \(\rho_t\geq0\). The unique minimizer of \cref{eq:main_objective} is
\begin{align}
{\qgd}_t(i)
&=\frac{{\pfull}_t(i)^{1/(1+\rho_t)}{\pret}_t(i)^{\rho_t/(1+\rho_t)}}{Z_t},
\label{eq:closed_form}\\
Z_t
&=\sum_{j\in\vocab}{\pfull}_t(j)^{1/(1+\rho_t)}
{\pret}_t(j)^{\rho_t/(1+\rho_t)}.
\notag
\end{align}
Equivalently, with \(\eta_t=\rho_t/(1+\rho_t)\in[0,1)\),
\begin{align}
\log {\qgd}_t(i)
&=(1-\eta_t)\log{\pfull}_t(i)\notag\\
&\quad+\eta_t\log{\pret}_t(i)-A_t,
\label{eq:log_pool}
\end{align}
where \(A_t\) is the log normalizer.
\end{theorem}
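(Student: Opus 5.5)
The cleanest route avoids Lagrange multipliers altogether. We rewrite the objective of \cref{eq:main_objective} as a single KL divergence to the candidate distribution ${\qgd}_t$, plus a constant. Fix $t$ and write $p={\pfull}_t$, $r={\pret}_t$, $\rho=\rho_t$. Both $p$ and $r$ are strictly positive, so both KL terms are finite for every $q\in\simplex$, using the convention $0\log 0=0$. Expanding gives
\begin{align*}
F(q)&=\sum_i q_i\log\frac{q_i}{p_i}+\rho\sum_i q_i\log\frac{q_i}{r_i}\\
&=(1+\rho)\sum_i q_i\Bigl[\log q_i-\log\bigl(p_i^{1/(1+\rho)}r_i^{\rho/(1+\rho)}\bigr)\Bigr].
\end{align*}
Now define $g_i=p_i^{1/(1+\rho)}r_i^{\rho/(1+\rho)}>0$, $Z=\sum_j g_j$, and $q^\star_i=g_i/Z$. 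Then $\log g_i=\log q^\star_i+\log Z$. Substituting, and using $\sum_i q_i=1$, yields the identity
\begin{equation*}
F(q)=(1+\rho)\,\KL(q\,\|\,q^\star)-(1+\rho)\log Z .
\end{equation*}

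The key step is this algebraic identity. Once we have it, Gibbs' inequality finishes the argument. For any $q\in\simplex$ we have $\KL(q\|q^\star)\ge 0$, with equality if and only if $q=q^\star$. Because $1+\rho\ge 1>0$, $F$ attains its minimum exactly at $q^\star$, and nowhere else. This gives both existence and uniqueness in one stroke. It also handles the boundary of the simplex automatically, since the identity holds for $q$ with zero entries under the $0\log 0$ convention. We never need to argue separately that the minimizer is interior, which a stationarity-based approach would require.

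The equivalent form \cref{eq:log_pool} follows directly. Set $\eta=\rho/(1+\rho)$, so that $1-\eta=1/(1+\rho)$. Taking logarithms of $q^\star$ gives
\begin{equation*}
\log q^\star_i=(1-\eta)\log p_i+\eta\log r_i-\log Z,
\end{equation*}
so $A_t=\log Z_t$. The map $\rho\mapsto\rho/(1+\rho)$ sends $[0,\infty)$ onto $[0,1)$, which gives the stated range of $\eta_t$. As a sanity check, $\rho=0$ gives $q^\star=p$, so the result recovers standard RAG.

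There is no real obstacle here. The only step that needs care is verifying the rearrangement: the factor $(1+\rho)$ must be pulled out correctly, and the $\log Z$ term must collapse because $\sum_i q_i=1$. A secondary point of rigor is the case where some $q_i=0$. I would address it by noting that the identity is term-by-term valid under $0\log 0=0$, and that the positivity assumptions on $p$ and $r$ keep every other term finite.
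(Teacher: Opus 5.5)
Your proof is correct, and it takes a different route from the paper's.

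The paper works in three steps:
\begin{itemize}
\item It expands the objective as $(1+\rho_t)\sum_i q_i\log q_i$ minus two terms linear in $q$.
\item It introduces a Lagrange multiplier for $\sum_i q_i=1$ and solves the coordinatewise stationarity condition to obtain the geometric mixture.
\item It appeals to strict convexity of the negative-entropy term for uniqueness.
\end{itemize}
To justify using the stationarity condition at all, the paper asserts, without deriving it, that full support of $\pfullat{t}$ and $\prett$ forces the optimum into the relative interior of $\simplex$.

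You instead complete the divergence. The identity $F(q)=(1+\rho)\KL(q\,\|\,q^\star)-(1+\rho)\log Z$ holds for every $q\in\simplex$, including boundary points under $0\log 0=0$. Gibbs' inequality then gives existence, uniqueness, and the closed form in one step, with no interior or convexity argument needed. Your route also yields more: the optimal value $-(1+\rho)\log Z$, and the exact suboptimality $F(q)-F(q^\star)=(1+\rho)\KL(q\,\|\,q^\star)$ of any competitor.

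The Lagrangian route is the more generic template. It extends to regularizers or extra constraints for which no such completion exists, which fits the decoding-as-optimization-on-the-simplex view the paper cites. To be fully rigorous, though, it needs the interior step it leaves implicit. For instance, $\partial_{q_i}(q_i\log q_i)\to-\infty$ as $q_i\to 0$, while the linear terms have bounded derivatives because the references are strictly positive. So shifting a little mass onto any zero coordinate strictly lowers the objective.
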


\begin{proof}
Expanding the objective gives
\begin{align}
&\KL\!\left(q\middle\|\pfullat{t}\right)
+\rho_t\KL\!\left(q\middle\|\prett\right)
\notag\\
&\quad =
(1+\rho_t)\sum_i q_i\log q_i
-\sum_i q_i\log \pfullat{t}(i)
\notag\\
&\qquad
-\rho_t\sum_i q_i\log \prett(i)
+ C .
\end{align}
where \(C\) collects entropy-of-reference constants that do not affect the minimizer. Introducing a Lagrange multiplier \(\mu\) for the constraint \(\sum_i q_i=1\), the first-order condition at any coordinate in the relative interior of the simplex is
\begin{align}
(1+\rho_t)(\log q_i+1)&-\log{\pfull}_t(i)\notag\\
&-\rho_t\log{\pret}_t(i)+\mu=0.
\end{align}
Solving for \(q_i\) and normalizing over the vocabulary yields \cref{eq:closed_form}. Strict convexity of the objective follows because the negative entropy term has positive coefficient \(1+\rho_t\) while the reference-dependent terms are linear in \(q\), so the optimum is unique. Full support of the two softmax distributions ensures that the optimum lies in the relative interior in exact arithmetic; \cref{app:proof_details} discusses numerical considerations.
\end{proof}

The closed form makes the boundary behavior of the method easy to read off. When \(\rho_t=0\), the exponent \(\eta_t\) is also zero and \({\qgd}_t\) collapses exactly onto \({\pfull}_t\); the method is therefore an exact extension of standard RAG rather than a perturbation that must be tuned toward zero to recover the baseline. As \(\rho_t\to\infty\), \(\eta_t\to1\) and \({\qgd}_t\to{\pret}_t\), so very large grounding weights anchor the decoder to the retrieval reference. Intermediate values interpolate smoothly between the two. The log-odds between any two tokens \(i\) and \(j\) take the equally simple form
\begin{align}
\log\frac{{\qgd}_t(i)}{{\qgd}_t(j)}
&=(1-\eta_t)\log\frac{{\pfull}_t(i)}{{\pfull}_t(j)}\notag\\
&\quad+\eta_t\log\frac{{\pret}_t(i)}{{\pret}_t(j)},
\label{eq:log_odds}
\end{align}
which shows that the fused distribution prefers token \(i\) over token \(j\) whenever both streams prefer \(i\), but reweights more cautiously when the two streams disagree. The update is invariant to additive shifts in logits because it is written in log probabilities, and its scale is controlled by \(\rho_t\) rather than by the arbitrary magnitude of raw model logits.

Finally, the KL-barycenter formulation avoids degeneracies that may arise in linear logit interpolation with KL regularization. In such cases, the objective can implicitly double-count the full-context distribution, leading to non-monotonic behavior in the regularization strength. In contrast, the barycentric formulation ensures a direct and monotone interpolation between the two distributions. A detailed comparison is provided in \cref{app:linear_logit}; \Cref{fig:kl_fusion,fig:grounded_simplex} visualize the KL-barycenter interpolation and its contrast with arithmetic probability mixing.

\subsection{Adaptive Grounding}
\label{sec:adaptive}

A fixed grounding weight is suboptimal in retrieval-augmented decoding, as it is often insufficient when the model prior conflicts with external evidence and overly restrictive when retrieved passages are noisy or irrelevant. The closed-form solution in \cref{eq:closed_form} suggests a natural balancing principle: the grounding weight $\rho_t$ should increase under stronger disagreement between the two decoding streams and decrease when retrieval reliability is low. Accordingly, we define
\begin{equation}
\rho_t = \rho_{\max}\, r(x,\docs)\, c_t,
\label{eq:rho_t}
\end{equation}
where $\rho_{\max}$ is a scalar hyperparameter tuned on a validation split under the same budget as CAD, AdaCAD, COIECD, and CoCoA. The terms $c_t$ and $r(x,\docs)$ capture token-level conflict and instance-level retrieval trust, respectively. \Cref{fig:adaptive_grounding} illustrates this computation graphically, showing how token-level conflict and retrieval trust jointly determine the adaptive grounding weight.

The token-level conflict score quantifies local disagreement between the full RAG distribution and the retrieval-conditioned distribution. We define a normalized Jensen--Shannon divergence:
\begin{align}
c_t
&= \frac{\JS\!\left(\pfullat{t}, \prett\right)}{\log 2},
\label{eq:conflict_score}\\
\JS(p,q)
&= \frac{1}{2}\KL(p\|m)
 + \frac{1}{2}\KL(q\|m),
\notag\\
m
&= \frac{p+q}{2}.
\notag
\end{align}
which is bounded in $[0,1]$ for finite vocabularies under natural logarithms. Higher disagreement leads to increased grounding strength by amplifying the contribution of the retrieval distribution in \cref{eq:closed_form}, consistent with the underlying optimization objective. Compared to KL divergence, JS divergence is symmetric and bounded, providing improved stability for adaptive control even in cases of highly skewed distributions.

The instance-level retrieval trust score is designed to mitigate over-grounding when retrieval quality is unreliable. Let $s_k$ denote the relevance score associated with passage $d_k$. We compute normalized weights as
\begin{equation}
 a_k = \frac{\exp(s_k/\tau_r)}{\sum_{\ell=1}^{m}\exp(s_\ell/\tau_r)},
\label{eq:retriever_weights}
\end{equation}
where $\tau_r$ is a temperature parameter, set on the validation split or fixed to one when retriever scores are already calibrated. The aggregated retrieval score is defined as $\bar{s} = \sum_k a_k s_k$, which is mapped to a bounded trust signal via
\begin{equation}
 r(x,\docs) = \sigma\big(\gamma(\bar{s} - b)\big),
\label{eq:trust_score}
\end{equation}
where $b$ is the median top-document score on the validation set and $\gamma$ controls the slope of the gating function. This signal does not aim to verify factual correctness; instead, it serves as a conservative mechanism that reduces grounding strength under low-confidence retrieval conditions. The score is computed once per instance, ensuring that the additional computational overhead remains negligible relative to per-token operations.

For ablation purposes, we also evaluate a static variant in which $\rho_t \equiv \rho$, thereby removing adaptive modulation. This setting isolates the effect of the KL-barycenter objective from the proposed scheduling strategy. Empirically, the static variant already yields consistent improvements over standard RAG decoding, while the adaptive formulation provides additional gains, particularly in high-conflict or low-retrieval-quality settings.

\Cref{fig:grounded_decoding} provides an overview of the proposed framework and \cref{alg:grounded_decoding} formalizes its algorithmic implementation. The complete procedure is presented in Algorithm~\ref{alg:grounded_decoding}, with detailed implementation specifics and the full algorithmic description deferred to Appendix~\ref{app:implementation_details}.

\section{Experimental Setup}
\label{sec:experiments}

We evaluate Grounded Decoding under a matched decoding protocol designed to isolate the effect of the decoding strategy from other confounding factors. All methods are evaluated under identical settings, including retrieved passages, prompt templates, citation format, sampling temperature, truncation strategy, stopping criteria, and maximum generation length. Hyperparameters are selected on a validation split with an equal tuning budget across methods, and all reported results are computed on a held-out test set. We report paired bootstrap 95\% confidence intervals over test instances using 10,000 resamples, with all metrics computed over the same generated outputs for fair comparison.

The base model is Llama-3-8B-Instruct \citep{meta2024llama3}, and the retriever is Contriever \citep{izacard2022unsupervised}, which indexes a fixed Wikipedia snapshot across all knowledge-intensive tasks. All experiments are conducted on NVIDIA A100 80GB GPUs using a vLLM backend modified to support dual-stream probability fusion. We release the full experimental setup, including model checkpoints, tokenizer, chat templates, retrieval index snapshot, CUDA version, vLLM commit, and random seeds, to ensure full reproducibility.

We evaluate on standard benchmarks including ALCE (ASQA, QAMPARI, and ELI5), reporting correctness, citation precision, citation recall, citation F1, and MAUVE \citep{gao2023enabling}; Natural Questions, reporting exact match, token-level F1, and evidence support rate \citep{kwiatkowski2019natural}; and FActScore, reporting atomic factual precision, along with average number of atomic facts and average generation length \citep{min2023factscore}. Reporting both length and fact counts alongside factuality metrics is necessary to ensure that improvements are not driven by overly conservative or overly verbose generation behavior.

The baseline set includes standard RAG using the same retriever and matched decoding configuration; CAD with a validation-tuned \(\beta\); AdaCAD, COIECD, and CoCoA with their published adaptive mechanisms under an equal tuning budget; DoLa with a validation-selected layer configuration; and kNN-LM with a fixed retrieval datastore. \Cref{app:baseline_config} provides the full tuning grid and implementation details for all baselines.
\section{Results}
\label{sec:results}

\begin{table*}[t]
\centering
\small
\caption{Main evaluation results under the matched decoding protocol. Grounded Decoding is the strongest method among the evaluated decoding-time baselines on factuality and citation metrics while maintaining comparable MAUVE fluency. Subscripts denote 95\% paired bootstrap confidence intervals. Best results are in bold.}
\label{tab:main_results}
\setlength{\tabcolsep}{2.7pt}
\renewcommand{\arraystretch}{1.08}
\resizebox{\textwidth}{!}{%
\begin{tabular}{lccccccccc}
\toprule
\textbf{Method} & \textbf{NQ EM} & \textbf{NQ F1} & \textbf{Support} & \textbf{ALCE Correct} & \textbf{Cit. P} &\textbf{ Cit. R} & \textbf{Cit. F1} & \textbf{MAUVE} &\textbf{ FActScore} \\
\midrule
Standard RAG & 41.2\ci{1.1} & 49.3\ci{1.0} & 62.1\ci{1.2} & 42.5\ci{1.0} & 45.1\ci{1.0} & 50.2\ci{1.1} & 47.5\ci{0.9} & 0.88\ci{0.01} & 71.2\ci{1.5} \\
CAD & 42.1\ci{1.2} & 50.4\ci{1.1} & 65.4\ci{1.3} & 44.2\ci{1.2} & 48.3\ci{1.1} & 51.5\ci{1.1} & 49.8\ci{1.1} & 0.85\ci{0.02} & 74.5\ci{1.4} \\
AdaCAD & 44.3\ci{1.0} & 52.1\ci{0.9} & 68.2\ci{1.1} & 46.5\ci{1.1} & 52.1\ci{1.0} & 53.4\ci{1.0} & 52.7\ci{1.0} & 0.86\ci{0.02} & 78.1\ci{1.2} \\
COIECD & 44.8\ci{1.1} & 52.5\ci{1.0} & 69.1\ci{1.0} & 46.8\ci{1.0} & 52.8\ci{1.0} & 54.1\ci{1.0} & 53.4\ci{0.9} & 0.87\ci{0.01} & 78.9\ci{1.1} \\
CoCoA & 45.1\ci{1.0} & 52.8\ci{0.9} & 69.8\ci{1.2} & 47.2\ci{0.9} & 53.5\ci{1.0} & 54.8\ci{1.0} & 54.1\ci{1.0} & 0.87\ci{0.01} & 79.5\ci{1.2} \\
DoLa & 42.5\ci{1.2} & 50.1\ci{1.1} & 64.2\ci{1.1} & 43.8\ci{1.1} & 47.1\ci{1.1} & 50.1\ci{1.1} & 48.5\ci{1.1} & 0.86\ci{0.02} & 73.2\ci{1.4} \\
kNN-LM & 41.8\ci{1.1} & 49.8\ci{1.0} & 63.5\ci{1.3} & 43.1\ci{1.0} & 46.5\ci{1.0} & 50.8\ci{1.1} & 48.5\ci{1.0} & 0.88\ci{0.01} & 72.5\ci{1.5} \\
Grounded Decoding (static) & 45.8\ci{0.9} & 53.5\ci{0.8} & 71.5\ci{1.0} & 48.5\ci{0.8} & 55.2\ci{0.9} & 56.5\ci{0.9} & 55.8\ci{0.9} & \textbf{0.89\ci{0.01}} & 81.2\ci{1.0} \\
Grounded Decoding (adaptive) & \textbf{47.6\ci{0.8}} & \textbf{55.2\ci{0.8}} & \textbf{74.8\ci{0.9}} & \textbf{50.2\ci{0.8}} & \textbf{58.4\ci{0.8}} & \textbf{58.9\ci{0.9}} & \textbf{58.6\ci{0.8}} & \textbf{0.89\ci{0.01}} & \textbf{84.5\ci{0.9}} \\
\bottomrule
\end{tabular}%
}
\end{table*}
\paragraph{Aggregate Performance Comparison}
\label{sec:main_results}

The aggregate performance is presented in \cref{tab:main_results}. The adaptive variant of Grounded Decoding achieves the highest FActScore, evidence support rate, and citation F1 among the evaluated decoding-time methods, while maintaining MAUVE comparable to standard RAG. The static variant also improves over standard RAG and all evaluated baselines, indicating that the KL-barycenter objective with a retrieval reference contributes a substantial portion of the overall gain. The adaptive schedule provides additional consistent improvements by modulating grounding strength at high-conflict tokens.

The results align with the design motivation of the method: the static formulation already biases generation toward evidence-consistent tokens, while the adaptive mechanism further concentrates this effect on tokens where retrieval-model disagreement is most pronounced.The aggregate trends in factuality, citation quality, and fluency are visualized in \Cref{fig:factuality_metrics,fig:citation_metrics,fig:mauve_metrics}.
\begin{table}[t]
\centering
\small
\caption{Ablations. Each row changes one component relative to full adaptive Grounded Decoding. Average length and average atomic facts help rule out gains caused only by shorter generations.}
\label{tab:ablations}
\resizebox{\columnwidth}{!}{%
\begin{tabular}{lcccc}
\toprule
\textbf{Variant} & \textbf{FActScore} & \textbf{Cit. F1} &\textbf{ Avg.\ facts }& \textbf{Avg.\ tokens} \\
\midrule
Standard RAG, $\rho_t=0$ & 71.2 & 47.5 & 24.5 & 185 \\
Static $\rho$ & 81.2 & 55.8 & 23.8 & 180 \\
Conflict only, $r(x,\docs)=1$ & 82.1 & 56.5 & 23.5 & 178 \\
Trust only, $c_t=1$ & 81.8 & 56.2 & 24.0 & 181 \\
Query-only reference & 76.5 & 51.2 & 24.2 & 183 \\
Arithmetic mixture & 82.5 & 56.8 & 23.6 & 179 \\
Full adaptive Grounded & \textbf{84.5} & \textbf{58.6} & 23.4 & 176 \\
\bottomrule
\end{tabular}%
}
\end{table}
\paragraph{Ablation Study}
\label{sec:ablations}

The ablation results in \cref{tab:ablations} disentangle the contributions of individual components. We replace the retrieval-only reference with a query-only reference (closer in spirit to CAD) and observe a notable drop in performance, with FActScore decreasing by nearly five points and citation F1 decreasing by more than four points. This result confirms that the retrieval-conditioned stream serves as the primary source of improvement. The arithmetic interpolation \((1-\alpha){\pfull}_t+\alpha{\pret}_t\) remains competitive, which shows that the existence of a retrieval-only reference matters more than the exact fusion form; however, the KL barycenter consistently performs better, which suggests more effective suppression of tokens deemed implausible by either stream. We ablate either the conflict-based factor or the retrieval-trust factor from the adaptive schedule and observe degraded performance relative to the full model, although both variants still outperform the static baseline. This result indicates that both components contribute complementary signals. We further analyze average facts and token counts, which rule out length effects and confirm that gains do not arise from shorter or more conservative generations. Supplementary ablation visualizations are provided in \Cref{fig:ablation_quality_metrics,fig:ablation_avg_tokens,fig:ablation_avg_facts}.
\begin{table}[t]
\centering
\small
\caption{Token-level trace from a representative FActScore biography. Adaptive grounding activates on factual spans (year, occupation, geographic relation) and remains close to zero on function words.}
\label{tab:trace}
\begin{tabular}{lccc}
\toprule
\textbf{Generated token} & \textbf{$\JS$ }&\textbf{ $\rho_t$} &\textbf{ $\eta_t$} \\
\midrule
\texttt{The}      & 0.02 & 0.06 & 0.06 \\
\texttt{novelist} & 0.41 & 1.27 & 0.56 \\
\texttt{was}      & 0.03 & 0.09 & 0.08 \\
\texttt{born}     & 0.04 & 0.13 & 0.12 \\
\texttt{in}       & 0.05 & 0.16 & 0.14 \\
\texttt{1947}     & 0.62 & 1.92 & 0.66 \\
\texttt{in}       & 0.04 & 0.13 & 0.11 \\
\texttt{Trieste}  & 0.58 & 1.80 & 0.64 \\
\bottomrule
\end{tabular}
\end{table}
\paragraph{Token-Level Analysis}
\label{sec:token_analysis}

To ground the analysis at the token level, \cref{tab:trace} presents a representative decoding trace from the FActScore evaluation set. The conflict score and grounding weight remain low for function words but increase sharply at tokens corresponding to factual entities such as years, occupations, and geographic names, which are known to be frequent sources of hallucination in long-form generation. We record these quantities alongside \({\pfull}_t(y_t)\), \({\pret}_t(y_t)\), and \({\qgd}_t(y_t)\) for all evaluated instances in the released artifact. These traces serve as behavioral diagnostics of the decoding dynamics rather than as evidence of internal causal mechanisms.\Cref{fig:token_level_grounding,fig:adaptive_grounding_trajectory} provide complementary visualizations of the same token-level grounding behavior.
\paragraph{Sensitivity Analysis}
\label{sec:sensitivity}

Sensitivity to the grounding weight remains limited across a broad validation range. For small values of \(\rho_{\max}\), the behavior closely resembles standard RAG, as \(\rho_t\) remains near zero for most decoding steps. In contrast, very large values over-anchor generation to the retrieval stream and may reduce answer specificity, particularly in ELI5-style settings where responses require additional explanatory content beyond retrieved facts. The selected operating point balances improvements in attribution quality against potential degradation in linguistic richness. Full sweep curves with confidence intervals are provided in the released artifact.

\begin{table}[t]
\centering
\small
\caption{Latency and memory utilization under matched hardware and generation settings. Values are means over repeated generation runs; the released artifact contains per-run logs.}
\label{tab:latency}
\resizebox{\columnwidth}{!}{%
\begin{tabular}{lcccc}
\toprule
\textbf{Method} & \textbf{Prefill ms} & \textbf{Decode ms/token} & \textbf{Tokens/s} & \textbf{Peak memory} \\
\midrule
Standard RAG & 45 & 12.5 & 80.0 & 1.2GB \\
CAD & 45 & 24.1 & 41.5 & 2.4GB \\
AdaCAD & 45 & 24.5 & 40.8 & 2.4GB \\
Grounded Decoding & 48 & 25.0 & 40.0 & 2.5GB \\
\bottomrule
\end{tabular}%
}
\end{table}
\paragraph{Efficiency Analysis}
\label{sec:efficiency}
The computational overhead associated with the additional decoding stream is reported in \cref{tab:latency}. Grounded Decoding approximately doubles per-token decoding time relative to standard RAG, which is expected given that the method computes both full-context and retrieval-conditioned next-token distributions at each decoding step. The resulting latency remains comparable to that of contrastive decoding baselines that similarly rely on multiple decoding streams. The modest increase in memory consumption relative to CAD primarily arises from the retrieval-stream cache and the vocabulary-level fusion buffers. In deployment scenarios where inference overhead is a primary concern, the static variant provides a favorable trade-off, achieving most of the factuality and citation improvements while maintaining approximately the same per-token decoding cost as CAD.


\section{Conclusion}
\label{sec:discussion}
Grounded Decoding is most effective when the retrieved documents are relevant and the full RAG distribution diverges from the retrieval conditioned distribution at factual decoding positions. Unlike CAD, which amplifies tokens that become more likely with context, Grounded Decoding emphasizes tokens that remain probable under a document conditioned continuation. This makes decoding more directly sensitive to evidence compatibility, while the full RAG stream preserves query intent when the retrieved evidence is ambiguous. The KL barycenter combines these signals by retaining query conditioned behavior and adding retrieval grounded constraints through an auxiliary distribution. Additional analysis and extended discussion are provided in Appendix~\ref{app:discussion}.

We formulate faithful RAG decoding as a KL barycenter between the full RAG distribution and a per token retrieval only reference distribution. The objective has a unique closed form solution, reduces exactly to standard RAG when grounding is disabled, and moves monotonically toward evidence anchoring as grounding increases. The adaptive schedule strengthens retrieval influence when distributional conflict and retrieval trust are high. Under matched evaluation against standard RAG, CAD, AdaCAD, COIECD, CoCoA, DoLa, and kNN LM, Grounded Decoding achieves the strongest factuality and citation results among the evaluated decoding time methods while maintaining comparable fluency. Overall, the results support retrieval anchored probability fusion as a practical and transparent alternative to context aware logit subtraction.

\section*{Limitations}
\label{sec:limitations}

 Grounded Decoding introduces additional inference cost by requiring a second decoding stream; although this cost is comparable to other dual-stream contrastive decoding methods, it remains a non-trivial overhead relative to standard RAG. The method is dependent on retrieval quality and may faithfully reproduce incorrect or irrelevant retrieved content, and therefore does not replace improvements in retriever training or corpus curation. The adaptive schedule relies on validation-calibrated constants; despite equalizing tuning budgets across baselines, residual hyperparameter sensitivity may persist in deployment. The formulation operates as a local next-token fusion mechanism and does not guarantee global factual consistency across complete responses. In addition, the retrieval-only reference may become ambiguous when documents describe multiple entities or when reasoning requires aggregation across passages, limiting its effectiveness in such settings.


\bibliography{references}

\begin{thebibliography}{28}
\providecommand{\natexlab}[1]{#1}

\bibitem[{Asai et~al.(2024)Asai, Wu, Wang, Sil, and Hajishirzi}]{asai2024selfrag}
Akari Asai, Zeqiu Wu, Yizhong Wang, Avirup Sil, and Hannaneh Hajishirzi. 2024.
\newblock \href {https://arxiv.org/abs/2310.11511} {Self-{RAG}: Learning to retrieve, generate, and critique through self-reflection}.
\newblock In \emph{International Conference on Learning Representations}.

\bibitem[{Borgeaud et~al.(2022)Borgeaud, Mensch, Hoffmann, Cai, Rutherford, Millican, van~den Driessche, Lespiau, Damoc, Clark, de~Las~Casas, Guy, Menick, Ring, Hennigan, Huang, Maggiore, Jones, Cassirer, Brock, Paganini, Irving, Vinyals, Osindero, Simonyan, Rae, Elsen, and Sifre}]{borgeaud2022improving}
Sebastian Borgeaud, Arthur Mensch, Jordan Hoffmann, Trevor Cai, Eliza Rutherford, Katie Millican, George van~den Driessche, Jean-Baptiste Lespiau, Bogdan Damoc, Aidan Clark, Diego de~Las~Casas, Aurelia Guy, Jacob Menick, Roman Ring, Tom Hennigan, Saffron Huang, Loren Maggiore, Chris Jones, Albin Cassirer, and 9 others. 2022.
\newblock \href {https://arxiv.org/abs/2112.04426} {Improving language models by retrieving from trillions of tokens}.
\newblock In \emph{Proceedings of the 39th International Conference on Machine Learning}, pages 2206--2240. PMLR.

\bibitem[{Chuang et~al.(2024)Chuang, Xie, Luo, Kim, Glass, and He}]{chuang2024dola}
Yung-Sung Chuang, Yujia Xie, Hongyin Luo, Yoon Kim, James Glass, and Pengcheng He. 2024.
\newblock {DoLa}: Decoding by contrasting layers improves factuality in large language models.
\newblock In \emph{International Conference on Learning Representations}.

\bibitem[{Dathathri et~al.(2020)Dathathri, Madotto, Lan, Hung, Frank, Molino, Yosinski, and Liu}]{dathathri2020plug}
Sumanth Dathathri, Andrea Madotto, Janice Lan, Jane Hung, Eric Frank, Piero Molino, Jason Yosinski, and Rosanne Liu. 2020.
\newblock Plug and play language models: A simple approach to controlled text generation.
\newblock In \emph{International Conference on Learning Representations}.

\bibitem[{Fan et~al.(2018)Fan, Lewis, and Dauphin}]{fan2018hierarchical}
Angela Fan, Mike Lewis, and Yann Dauphin. 2018.
\newblock \href {https://doi.org/10.18653/v1/P18-1082} {Hierarchical neural story generation}.
\newblock In \emph{Proceedings of the 56th Annual Meeting of the Association for Computational Linguistics (Volume 1: Long Papers)}, pages 889--898. Association for Computational Linguistics.

\bibitem[{Gao et~al.(2023)Gao, Yen, Yu, and Chen}]{gao2023enabling}
Tianyu Gao, Howard Yen, Jiatong Yu, and Danqi Chen. 2023.
\newblock \href {https://doi.org/10.18653/v1/2023.emnlp-main.398} {Enabling large language models to generate text with citations}.
\newblock In \emph{Proceedings of the 2023 Conference on Empirical Methods in Natural Language Processing}, pages 6465--6488. Association for Computational Linguistics.

\bibitem[{Guu et~al.(2020)Guu, Lee, Tung, Pasupat, and Chang}]{guu2020realm}
Kelvin Guu, Kenton Lee, Zora Tung, Panupong Pasupat, and Ming-Wei Chang. 2020.
\newblock {REALM}: Retrieval-augmented language model pre-training.
\newblock In \emph{Proceedings of the 37th International Conference on Machine Learning}, pages 3929--3938. PMLR.

\bibitem[{Holtzman et~al.(2020)Holtzman, Buys, Du, Forbes, and Choi}]{holtzman2020curious}
Ari Holtzman, Jan Buys, Li~Du, Maxwell Forbes, and Yejin Choi. 2020.
\newblock The curious case of neural text degeneration.
\newblock In \emph{International Conference on Learning Representations}.

\bibitem[{Huang et~al.(2025)Huang, Yu, Ma, Zhong, Feng, Wang, Chen, Peng, Feng, Qin, and Liu}]{huang2025survey}
Lei Huang, Weijiang Yu, Weitao Ma, Weihong Zhong, Zhangyin Feng, Haotian Wang, Qianglong Chen, Weihua Peng, Xiaocheng Feng, Bing Qin, and Ting Liu. 2025.
\newblock \href {https://doi.org/10.1145/3703155} {A survey on hallucination in large language models: Principles, taxonomy, challenges, and open questions}.
\newblock \emph{ACM Transactions on Information Systems}, 43(2):1--55.
\newblock Also available as arXiv:2311.05232.

\bibitem[{Izacard et~al.(2022)Izacard, Caron, Hosseini, Riedel, Bojanowski, Joulin, and Grave}]{izacard2022unsupervised}
Gautier Izacard, Mathilde Caron, Lucas Hosseini, Sebastian Riedel, Piotr Bojanowski, Armand Joulin, and Edouard Grave. 2022.
\newblock Unsupervised dense information retrieval with contrastive learning.
\newblock \emph{Transactions on Machine Learning Research}.

\bibitem[{Ji et~al.(2026)Ji, Tutunov, Zimmer, and Bou-Ammar}]{ji2026decoding}
Xiaotong Ji, Rasul Tutunov, Matthieu Zimmer, and Haitham Bou-Ammar. 2026.
\newblock Decoding as optimisation on the probability simplex: From top-{K} to top-{P} (nucleus) to best-of-{K} samplers.
\newblock \emph{arXiv preprint arXiv:2602.18292}.

\bibitem[{Ji et~al.(2023)Ji, Lee, Frieske, Yu, Su, Xu, Ishii, Bang, Madotto, and Fung}]{ji2023survey}
Ziwei Ji, Nayeon Lee, Rita Frieske, Tiezheng Yu, Dan Su, Yan Xu, Etsuko Ishii, Ye~Jin Bang, Andrea Madotto, and Pascale Fung. 2023.
\newblock \href {https://doi.org/10.1145/3571730} {Survey of hallucination in natural language generation}.
\newblock \emph{ACM Computing Surveys}, 55(12):1--38.

\bibitem[{Khandelwal et~al.(2025)Khandelwal, Gupta, and Agrawal}]{khandelwal2025cocoa}
Anant Khandelwal, Manish Gupta, and Puneet Agrawal. 2025.
\newblock {CoCoA}: Confidence- and context-aware adaptive decoding for resolving knowledge conflicts in large language models.
\newblock In \emph{Proceedings of the 2025 Conference on Empirical Methods in Natural Language Processing}, pages 6835--6855. Association for Computational Linguistics.

\bibitem[{Khandelwal et~al.(2020)Khandelwal, Levy, Jurafsky, Zettlemoyer, and Lewis}]{khandelwal2020generalization}
Urvashi Khandelwal, Omer Levy, Dan Jurafsky, Luke Zettlemoyer, and Mike Lewis. 2020.
\newblock Generalization through memorization: Nearest neighbor language models.
\newblock In \emph{International Conference on Learning Representations}.

\bibitem[{Krause et~al.(2021)Krause, Gotmare, McCann, Keskar, Joty, Socher, and Rajani}]{krause2021gedi}
Ben Krause, Akhilesh~Deepak Gotmare, Bryan McCann, Nitish~Shirish Keskar, Shafiq Joty, Richard Socher, and Nazneen~Fatema Rajani. 2021.
\newblock \href {https://doi.org/10.18653/v1/2021.findings-emnlp.424} {{GeDi}: Generative discriminator guided sequence generation}.
\newblock In \emph{Findings of the Association for Computational Linguistics: EMNLP 2021}, pages 4929--4952. Association for Computational Linguistics.

\bibitem[{Kwiatkowski et~al.(2019)Kwiatkowski, Palomaki, Redfield, Collins, Parikh, Alberti, Epstein, Polosukhin, Devlin, Lee, Toutanova, Jones, Kelcey, Chang, Dai, Uszkoreit, Le, and Petrov}]{kwiatkowski2019natural}
Tom Kwiatkowski, Jennimaria Palomaki, Olivia Redfield, Michael Collins, Ankur Parikh, Chris Alberti, Danielle Epstein, Illia Polosukhin, Jacob Devlin, Kenton Lee, Kristina Toutanova, Llion Jones, Matthew Kelcey, Ming-Wei Chang, Andrew~M. Dai, Jakob Uszkoreit, Quoc Le, and Slav Petrov. 2019.
\newblock \href {https://doi.org/10.1162/tacl_a_00276} {Natural questions: A benchmark for question answering research}.
\newblock \emph{Transactions of the Association for Computational Linguistics}, 7:452--466.

\bibitem[{Lewis et~al.(2020)Lewis, Perez, Piktus, Petroni, Karpukhin, Goyal, K{\"u}ttler, Lewis, Yih, Rockt{\"a}schel, Riedel, and Kiela}]{lewis2020retrieval}
Patrick Lewis, Ethan Perez, Aleksandra Piktus, Fabio Petroni, Vladimir Karpukhin, Naman Goyal, Heinrich K{\"u}ttler, Mike Lewis, Wen-tau Yih, Tim Rockt{\"a}schel, Sebastian Riedel, and Douwe Kiela. 2020.
\newblock Retrieval-augmented generation for knowledge-intensive {NLP} tasks.
\newblock In \emph{Advances in Neural Information Processing Systems}, volume~33, pages 9459--9474.

\bibitem[{Li et~al.(2023)Li, Holtzman, Fried, Liang, Eisner, Hashimoto, Zettlemoyer, and Lewis}]{li2023contrastive}
Xiang~Lisa Li, Ari Holtzman, Daniel Fried, Percy Liang, Jason Eisner, Tatsunori~B. Hashimoto, Luke Zettlemoyer, and Mike Lewis. 2023.
\newblock \href {https://doi.org/10.18653/v1/2023.acl-long.687} {Contrastive decoding: Open-ended text generation as optimization}.
\newblock In \emph{Proceedings of the 61st Annual Meeting of the Association for Computational Linguistics (Volume 1: Long Papers)}, pages 12286--12312. Association for Computational Linguistics.

\bibitem[{Longpre et~al.(2021)Longpre, Perisetla, Chen, Ramesh, DuBois, and Singh}]{longpre2021entity}
Shayne Longpre, Kartik Perisetla, Anthony Chen, Nikhil Ramesh, Chris DuBois, and Sameer Singh. 2021.
\newblock \href {https://doi.org/10.18653/v1/2021.emnlp-main.565} {Entity-based knowledge conflicts in question answering}.
\newblock In \emph{Proceedings of the 2021 Conference on Empirical Methods in Natural Language Processing}, pages 7052--7063. Association for Computational Linguistics.

\bibitem[{{Meta AI}(2024)}]{meta2024llama3}
{Meta AI}. 2024.
\newblock Meta {Llama} 3 model card.
\newblock \url{https://github.com/meta-llama/llama3/blob/main/MODEL_CARD.md}.
\newblock Model card, accessed 2026-05-15.

\bibitem[{Min et~al.(2023)Min, Krishna, Lyu, Lewis, Yih, Koh, Iyyer, Zettlemoyer, and Hajishirzi}]{min2023factscore}
Sewon Min, Kalpesh Krishna, Xinxi Lyu, Mike Lewis, Wen-tau Yih, Pang~Wei Koh, Mohit Iyyer, Luke Zettlemoyer, and Hannaneh Hajishirzi. 2023.
\newblock \href {https://doi.org/10.18653/v1/2023.emnlp-main.741} {{FActScore}: Fine-grained atomic evaluation of factual precision in long form text generation}.
\newblock In \emph{Proceedings of the 2023 Conference on Empirical Methods in Natural Language Processing}, pages 12076--12100. Association for Computational Linguistics.

\bibitem[{Nguyen et~al.(2025)Nguyen, Baker, Neo, Roush, Kirsch, and Shwartz-Ziv}]{nguyen2025minp}
Minh Nguyen, Andrew Baker, Clement Neo, Allen Roush, Andreas Kirsch, and Ravid Shwartz-Ziv. 2025.
\newblock Turning up the heat: Min-{P} sampling for creative and coherent {LLM} outputs.
\newblock In \emph{International Conference on Learning Representations}.
\newblock Oral; arXiv:2407.01082.

\bibitem[{Ouyang et~al.(2022)Ouyang, Wu, Jiang, Almeida, Wainwright, Mishkin, Zhang, Agarwal, Slama, Ray, Schulman, Hilton, Kelton, Miller, Simens, Askell, Welinder, Christiano, Leike, and Lowe}]{ouyang2022training}
Long Ouyang, Jeffrey Wu, Xu~Jiang, Diogo Almeida, Carroll~L. Wainwright, Pamela Mishkin, Chong Zhang, Sandhini Agarwal, Katarina Slama, Alex Ray, John Schulman, Jacob Hilton, Fraser Kelton, Luke Miller, Maddie Simens, Amanda Askell, Peter Welinder, Paul~F. Christiano, Jan Leike, and Ryan Lowe. 2022.
\newblock Training language models to follow instructions with human feedback.
\newblock In \emph{Advances in Neural Information Processing Systems}, volume~35, pages 27730--27744.

\bibitem[{Rafailov et~al.(2023)Rafailov, Sharma, Mitchell, Manning, Ermon, and Finn}]{rafailov2023direct}
Rafael Rafailov, Archit Sharma, Eric Mitchell, Christopher~D. Manning, Stefano Ermon, and Chelsea Finn. 2023.
\newblock Direct preference optimization: Your language model is secretly a reward model.
\newblock In \emph{Advances in Neural Information Processing Systems}, volume~36.

\bibitem[{Shi et~al.(2024)Shi, Han, Lewis, Tsvetkov, Zettlemoyer, and Yih}]{shi2024trusting}
Weijia Shi, Xiaochuang Han, Mike Lewis, Yulia Tsvetkov, Luke Zettlemoyer, and Wen-tau Yih. 2024.
\newblock \href {https://doi.org/10.18653/v1/2024.naacl-short.69} {Trusting your evidence: Hallucinate less with context-aware decoding}.
\newblock In \emph{Proceedings of the 2024 Conference of the North American Chapter of the Association for Computational Linguistics: Human Language Technologies (Volume 2: Short Papers)}, pages 783--791. Association for Computational Linguistics.

\bibitem[{Wang et~al.(2025)Wang, Prasad, Stengel-Eskin, and Bansal}]{wang2025adacad}
Han Wang, Archiki Prasad, Elias Stengel-Eskin, and Mohit Bansal. 2025.
\newblock {AdaCAD}: Adaptively decoding to balance conflicts between contextual and parametric knowledge.
\newblock In \emph{Proceedings of the 2025 Conference of the Nations of the Americas Chapter of the Association for Computational Linguistics: Human Language Technologies (Volume 1: Long Papers)}, pages 11636--11652. Association for Computational Linguistics.

\bibitem[{Yang and Klein(2021)}]{yang2021fudge}
Kevin Yang and Dan Klein. 2021.
\newblock \href {https://doi.org/10.18653/v1/2021.naacl-main.276} {{FUDGE}: Controlled text generation with future discriminators}.
\newblock In \emph{Proceedings of the 2021 Conference of the North American Chapter of the Association for Computational Linguistics: Human Language Technologies}, pages 3511--3535. Association for Computational Linguistics.

\bibitem[{Yuan et~al.(2024)Yuan, Yang, Wang, Liu, Zhao, and Liu}]{yuan2024coiecd}
Xiaowei Yuan, Zhao Yang, Yequan Wang, Shengping Liu, Jun Zhao, and Kang Liu. 2024.
\newblock \href {https://doi.org/10.18653/v1/2024.findings-acl.234} {Discerning and resolving knowledge conflicts through adaptive decoding with contextual information-entropy constraint}.
\newblock In \emph{Findings of the Association for Computational Linguistics: ACL 2024}, pages 3903--3922. Association for Computational Linguistics.

\end{thebibliography}

\appendix

\section{Extended Related Work}
\label{app:related_work}
\paragraph{Grounded Decoding}
Grounded Decoding builds on this line of work but differs in formulation and objective design. Instead of contrasting against a context-free distribution, it introduces a retrieval-only distribution conditioned solely on retrieved documents. This formulation directly promotes tokens supported by external evidence rather than suppressing generic continuations. Moreover, the decoding objective is derived from an explicit optimization problem over the probability simplex, yielding a principled probabilistic interpretation and well-behaved boundary properties.
\paragraph{Decoding-Time Interventions}
Decoding-time interventions have been widely studied as lightweight alternatives to retraining large language models. Contrastive Decoding compares an expert model with an amateur or degraded model to suppress generic generations and improve specificity \citep{li2023contrastive}. DoLa leverages differences between early and late transformer layers to enhance factuality in generation \citep{chuang2024dola}. kNN-LM interpolates parametric predictions with a retrieval datastore to improve fluency and factual consistency \citep{khandelwal2020generalization}.

\paragraph{Controllable Generation}
Other approaches focus on controllable generation using external signals. FUDGE conditions generation on future attribute classifiers without retraining the base model \citep{yang2021fudge}. PPLM steers generation using gradient-based updates from attribute models \citep{dathathri2020plug}. GeDi uses discriminative models to guide decoding toward desired attributes \citep{krause2021gedi}.

\paragraph{Grounding in Retrieval}
While these methods improve controllability or fluency, they are not explicitly designed to enforce token-level grounding in retrieved evidence. In contrast, Grounded Decoding directly integrates retrieval information into the decoding objective to improve evidence alignment at every generation step.
\section{Additional Proof Details}
\label{app:proof_details}

The Hessian of \((1+\rho_t)\sum_i q_i\log q_i\) on the positive orthant is diagonal with entries \((1+\rho_t)/q_i\), which are strictly positive for \(q_i>0\). The remaining terms in \cref{eq:main_objective} are linear in \(q\). The objective is therefore strictly convex on the relative interior of the simplex, and the optimum is unique. Since \({\pfull}_t\) and \({\pret}_t\) have full support in exact softmax arithmetic, the optimizer also has full support, and the first-order condition used in the proof of \cref{thm:closed_form} applies coordinate-wise.

Numerically, both reference distributions can have entries that underflow in low-precision arithmetic. We compute the fusion entirely in log space, $\log {\qgd}_t = (1-\eta_t)\log{\pfull}_t + \eta_t \log{\pret}_t$, followed by a single $\logsumexp$ for the normalizer. When the base model runs in fp16 or bfloat16, the fusion and divergence computations are cast to float32 for stability. This is purely a numerical convenience and does not change the closed-form solution.

\Cref{eq:closed_form} can be viewed as a two-expert product model with exponents that sum to one, in the sense of a logarithmic opinion pool. The interpretation is useful but should not be overstated, because both experts are the same base language model under different prompts, and their errors are typically correlated. The method is therefore not an ensemble of independently trained models, and the gains we observe cannot be attributed to error decorrelation as in traditional ensembling.

An arithmetic mixture \((1-\alpha){\pfull}_t+\alpha{\pret}_t\) also gives exact fallback to the full distribution at \(\alpha=0\), but it behaves differently in the tails. The geometric mixture in \cref{eq:closed_form} penalizes tokens that either distribution assigns very low probability to, while an arithmetic mixture can keep such tokens alive when one distribution assigns them high mass. Which behavior is preferable is an empirical question; the ablation in \cref{tab:ablations} shows that the geometric form outperforms the arithmetic form on factuality and citation metrics in our setting, though the gap is small enough that an arithmetic mixture remains a reasonable choice when a logarithmic implementation is inconvenient.

\section{Why Not a Linear Logit Objective}
\label{app:linear_logit}

A natural but flawed alternative to \cref{eq:main_objective} is to minimize a linear logit term plus a KL penalty to an interpolated target distribution. That objective can produce a closed form, but it has two problems in the RAG setting. First, if the target distribution already includes \({\pfull}_t\), multiplying again by a softmax-shaped term double-counts the full-context model and produces a rule that no longer reduces to standard RAG in any limit. Second, the direction of the regularization parameter becomes easy to misinterpret: in such an objective, larger values strengthen the KL anchor toward the interpolated target, whereas smaller values collapse the rule toward greedy logit maximization, so the parameter's qualitative effect inverts as the limit is taken. The KL-barycenter formulation avoids both issues, because it interpolates between two distributions directly and makes the grounding weight monotone by construction. The same monotonicity is essential for the adaptive schedule of \cref{sec:adaptive} to behave as intended: increasing \(\rho_t\) in response to higher conflict or higher retrieval trust always shifts mass toward the retrieval reference, never away from it.

\section{Baseline Configuration}
\label{app:baseline_config}

All baselines are evaluated under a matched decoding protocol. Hyperparameters are tuned exclusively on the validation set, with a fixed and comparable tuning budget across methods, as summarized in \cref{tab:baseline_config} 
\begin{table*}[t]
\centering
\small
\caption{Baseline configuration and tuning budget. All hyperparameters are selected on validation data only.}
\label{tab:baseline_config}
\resizebox{\textwidth}{!}{%
\begin{tabular}{lll}
\toprule
\textbf{Method} & \textbf{Tuned parameters} & \textbf{Implementation notes} \\
\midrule
Standard RAG & temperature, top-$p$, maximum length & Same retrieved passages and prompt template as all methods. \\
CAD & $\beta\in\{0.1,0.3,0.5,1.0,1.5,2.0\}$ & Full-context stream contrasted with query-only/no-context stream. \\
AdaCAD & published adaptation strength grid & Uses the authors' conflict-adaptive rule under the same retrieval outputs. \\
COIECD & published entropy/conflict grid & Uses contextual information entropy adaptation under the same decoding budget. \\
CoCoA & published confidence/divergence grid & Uses confidence and context-awareness adaptation under matched prompts. \\
DoLa & layer pair/grid from validation & Same generation length and truncation rule as RAG. \\
kNN-LM & $k$, interpolation weight, datastore layer & Datastore built from retrieved passages only for each example. \\
Grounded static & $\rho$ & Constant grounding weight selected on validation. \\
Grounded adaptive & $\rho_{\max},\tau_r,\gamma$ & Conflict and retrieval-trust schedule selected on validation. \\
\bottomrule
\end{tabular}%
}
\end{table*}

\section{Implementation Details and Algorithm}
\label{app:implementation_details}

We now describe the implementation details and provide the formal algorithmic procedure for Grounded Decoding.The decoder maintains two parallel streams that share identical model parameters, tokenizer, and decoding hyperparameters. The full stream is conditioned on a prompt containing the system message, query \(x\), and retrieved documents \(\docs\), whereas the retrieval stream is conditioned on a matched prompt that excludes the query and includes only the system message and documents.

At each decoding step \(t\), both streams generate next-token distributions conditioned on the same generated prefix \(y_{<t}\). The fused distribution is computed as the KL-barycenter solution defined in \cref{eq:closed_form}, using the adaptive weight specified in \cref{eq:rho_t}. Standard decoding transformations, including temperature scaling and truncation strategies such as top-\(p\), top-\(k\), or min-\(p\), are applied to the fused distribution \({\qgd}_t\) prior to sampling. Following token generation, both streams update their key-value caches with the sampled token \(y_t\), and decoding proceeds autoregressively.

The proposed method operates entirely at the probability level, following the final vocabulary projection. It does not modify internal model components such as attention layers, residual connections, MLP blocks, normalization modules, positional encodings (e.g., RoPE or ALiBi), or parallelization schemes including tensor or pipeline sharding. While these architectural components influence the individual stream distributions, the proposed fusion mechanism is applied as a post-hoc operation on logits.

From an implementation perspective, computations are performed in log-space using \texttt{log\_softmax}. When the underlying model operates in fp16 or bfloat16 precision, the fusion and divergence computations are cast to float32 to mitigate numerical underflow in low-probability regions of the vocabulary distribution. All decoding constraints, including top-\(p\), top-\(k\), min-\(p\), and temperature scaling, are applied after fusion and are kept consistent across all baselines unless explicitly stated otherwise \citep{holtzman2020curious,fan2018hierarchical,nguyen2025minp}.

The two-stream design introduces additional inference overhead. While the document prefix for the retrieval stream can be cached and reused across time steps, each generated token requires one forward pass for the full stream, one forward pass for the retrieval stream, and a subsequent fusion operation over the vocabulary. Overall, the computational cost is comparable to other contrastive decoding approaches that maintain multiple streams, with minor variations arising from prompt-length differences.We empirically evaluate this computational overhead in \cref{sec:results} under controlled settings with matched context length, batch size, numerical precision, and hardware configuration.
\begin{algorithm}[t]
\caption{Grounded Decoding with a per-token retrieval reference.}
\label{alg:grounded_decoding}
\begin{algorithmic}[1]
\REQUIRE model $M$, query $x$, retrieved documents $\docs$, retriever scores $s_1,\ldots,s_m$, maximum grounding weight $\rho_{\max}$, generation limit $T$
\STATE Format a full prompt with the system instruction, $x$, and $\docs$.
\STATE Format a retrieval-only prompt with the same template and system instruction, containing $\docs$ and omitting only $x$.
\STATE Initialize full-stream and retrieval-stream KV caches with their respective prompts.
\STATE Compute retrieval trust $r(x,\docs)$ from \cref{eq:trust_score}.
\FOR{$t=1$ to $T$}
    \STATE ${\pfull}_t \gets p_\theta(\cdot\mid x,\docs,y_{<t})$ from the full stream.
    \STATE ${\pret}_t \gets p_\theta(\cdot\mid \docs,y_{<t})$ from the retrieval stream.
    \STATE $c_t \gets \JS({\pfull}_t,{\pret}_t)/\log 2$.
    \STATE $\rho_t \gets \rho_{\max}\,r(x,\docs)\,c_t$; $\eta_t \gets \rho_t/(1+\rho_t)$.
    \STATE $\log q_t \gets (1-\eta_t)\log{\pfull}_t+\eta_t\log{\pret}_t-\log Z_t$.
    \STATE Apply the matched truncation rule and sample or select $y_t$.
    \STATE Append $y_t$ to both streams and update both KV caches.
\ENDFOR
\RETURN $y_1,\ldots,y_T$
\end{algorithmic}
\end{algorithm}

\section{Extended Discussion}
\label{app:discussion}
The ablation study in \cref{tab:ablations} systematically isolates the contribution of each component of the proposed method. The strong performance of the arithmetic interpolation baseline indicates that the retrieval-conditioned reference itself accounts for a substantial portion of the improvement over standard RAG. The geometric KL barycenter nevertheless provides additional and consistent gains, suggesting that the multiplicative fusion more effectively suppresses unsupported continuations favored by only one stream. Token-level analysis supports this interpretation, showing that adaptive grounding primarily activates on content-bearing factual tokens rather than uniformly across the generated response.

Several limitations follow from this formulation. The proposed method operates as a decoding-time intervention on next-token probabilities and should not be interpreted as evidence of causal or mechanistic factual reasoning within the model. Because the method explicitly increases reliance on retrieved evidence, retrieval quality and corpus reliability become more critical than in conventional RAG systems, increasing sensitivity to retrieval noise or poisoned corpora. In addition, the grounding mechanism operates locally at the token level and does not explicitly enforce global consistency across a complete response. Applications requiring stronger factual guarantees may therefore benefit from combining Grounded Decoding with downstream verification or consistency-checking procedures.

\section{Prompt Templates}
\label{app:prompts}

We use matched prompt templates throughout. The full stream contains the system message, the query, the retrieved documents, and the assistant prefix; the retrieval stream uses the same system message and document formatting but removes the query.

\begin{quote}\scriptsize\ttfamily
\begin{tabular}{@{}l@{}}
System: Use only provided documents.\\
Cite documents when needed.\\[2pt]
User: Question: \{question\}\\
Documents:\\
Doc 1: \{passage\_1\}\\
...\\
Doc m: \{passage\_m\}\\[2pt]
Assistant:
\end{tabular}
\end{quote}

The matched retrieval stream is:

\begin{quote}\scriptsize\ttfamily
\begin{tabular}{@{}l@{}}
System: Use only provided documents.\\
Cite documents when needed.\\[2pt]
User: Documents:\\
Doc 1: \{passage\_1\}\\
...\\
Doc m: \{passage\_m\}\\[2pt]
Assistant:
\end{tabular}
\end{quote}

\section{Ethics and Broader Impact}
\label{app:ethics}

Grounded Decoding is designed to reduce unsupported generation by increasing the influence of retrieved evidence on the decoding process. While this can improve reliability in settings where the retrieval corpus is curated and up to date, it may also amplify harmful, biased, or incorrect content when the underlying corpus is polluted, untrusted, or adversarially manipulated. Consequently, deployment should incorporate retrieval-quality filtering, source verification, and appropriate safeguards on retrieved content. In addition, systems based on this approach may benefit from user-facing uncertainty indicators and transparency mechanisms to support responsible use in downstream applications.

\section{Supplementary Figures}
\label{app:supplementary_figures}

We include supplementary figures to provide further intuition, implementation detail, and empirical analysis for Grounded Decoding. These figures illustrate the adaptive grounding weight, the full and retrieval-only prompt streams, the geometry of KL-based fusion, factuality and citation results, ablation trends, and token-level grounding behavior.

\begin{figure*}[t]
\centering
\includegraphics[width=0.6\textwidth]{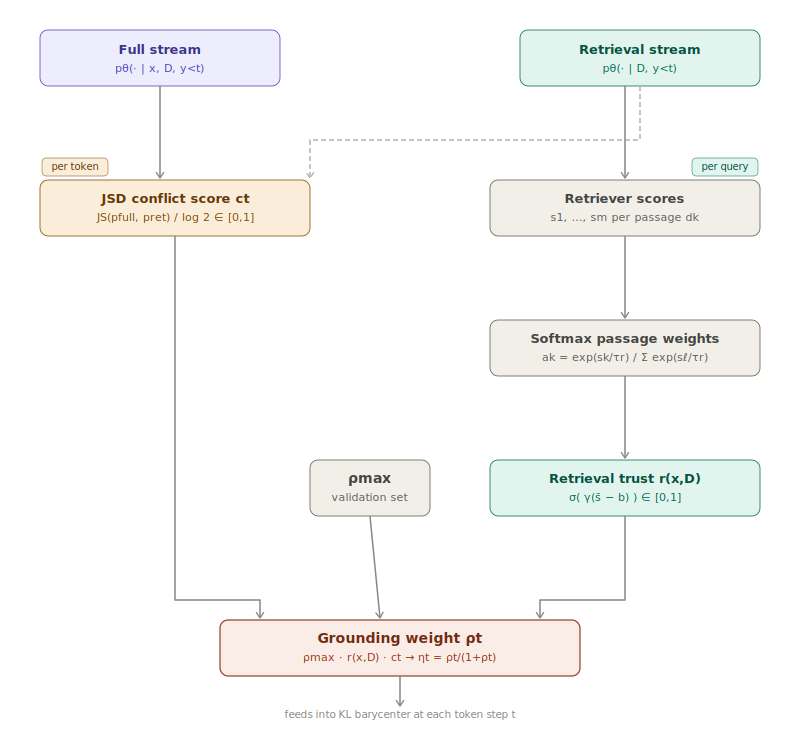}
\caption{Computation of the adaptive grounding weight $\rho_t$ (\ref{sec:adaptive}). The weight is defined as the product of three components: a constant $\rho_{\max}$ tuned on a validation split, a token-level conflict score $c_t$, and an instance-level retrieval trust score $r(x,\mathcal{D})$. The conflict score $c_t$ is computed as the normalized Jensen--Shannon divergence between the full RAG distribution and the retrieval-only distribution at each decoding step. The retrieval trust score $r(x,\mathcal{D})$ is derived from retriever passage scores via softmax normalization followed by a sigmoid gating function and is computed once per query. Low conflict or weak retrieval keeps $\rho_t$ near zero, resulting in behavior similar to standard RAG. High conflict combined with strong retrieval increases $\rho_t$ and shifts the KL barycenter toward the retrieval-only reference. The dashed arrow indicates that the retrieval stream contributes to both branches of the computation.}
\label{fig:adaptive_grounding}
\end{figure*}

\begin{figure*}[t]
\centering
\includegraphics[width=0.8\textwidth]{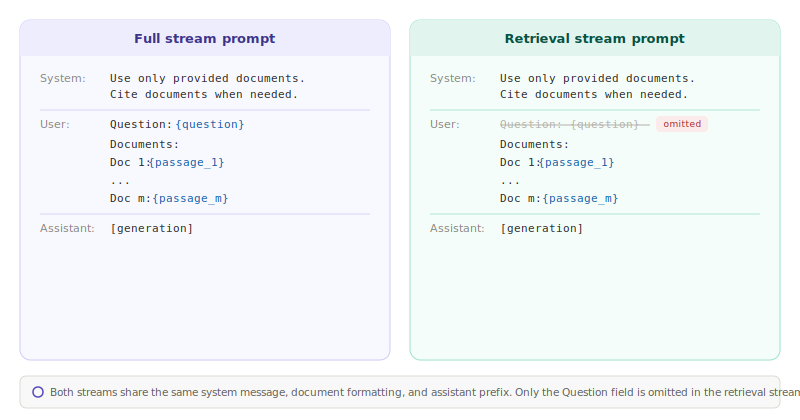}
\caption{Prompt templates used for the full stream and retrieval stream. Both streams share identical system message, document formatting, and assistant prefix. The retrieval stream omits only the \texttt{Question: \{question\}} field. This comparison demonstrates that the only distinction between the two streams is the presence of the query field.}
\label{fig:prompt_template_comparison}
\end{figure*}

\begin{figure*}
\centering
\includegraphics[width=0.95\textwidth]{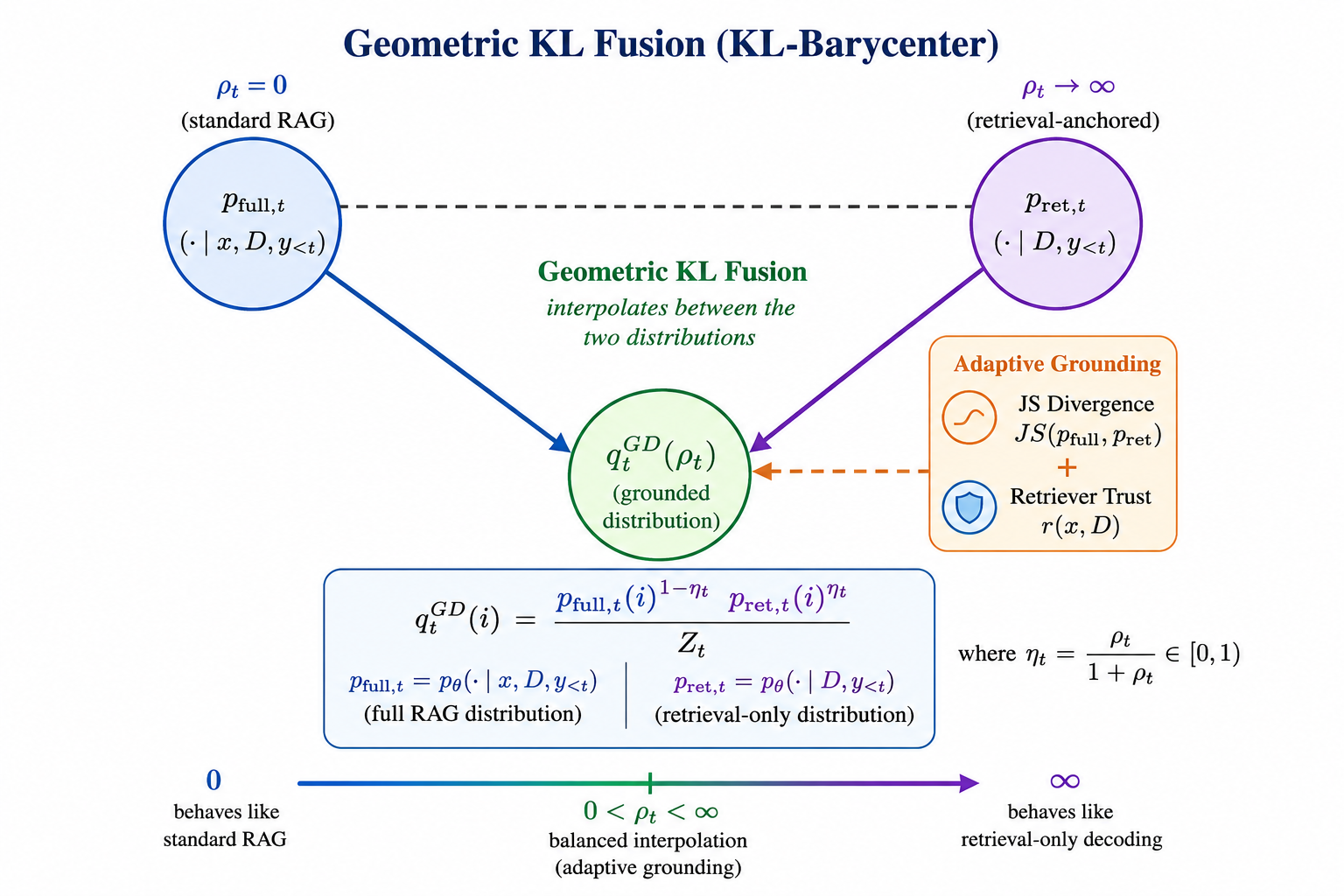}
\caption{Geometric interpretation of Grounded Decoding as KL-barycenter probability fusion between the full RAG distribution $p_{\text{full},t}$ and the retrieval-only reference distribution $p_{\text{ret},t}$. The grounding parameter $\rho_t$ controls interpolation from standard RAG behavior ($\rho_t = 0$) toward retrieval-anchored decoding ($\rho_t \rightarrow \infty$). The grounded distribution $q^{\text{GD}}_t$ is obtained through geometric KL fusion. Adaptive grounding further adjusts retrieval influence using distributional conflict and retriever trust signals.}
\label{fig:kl_fusion}
\end{figure*}

\begin{figure*}[t]
\centering
\includegraphics[width=0.95\textwidth]{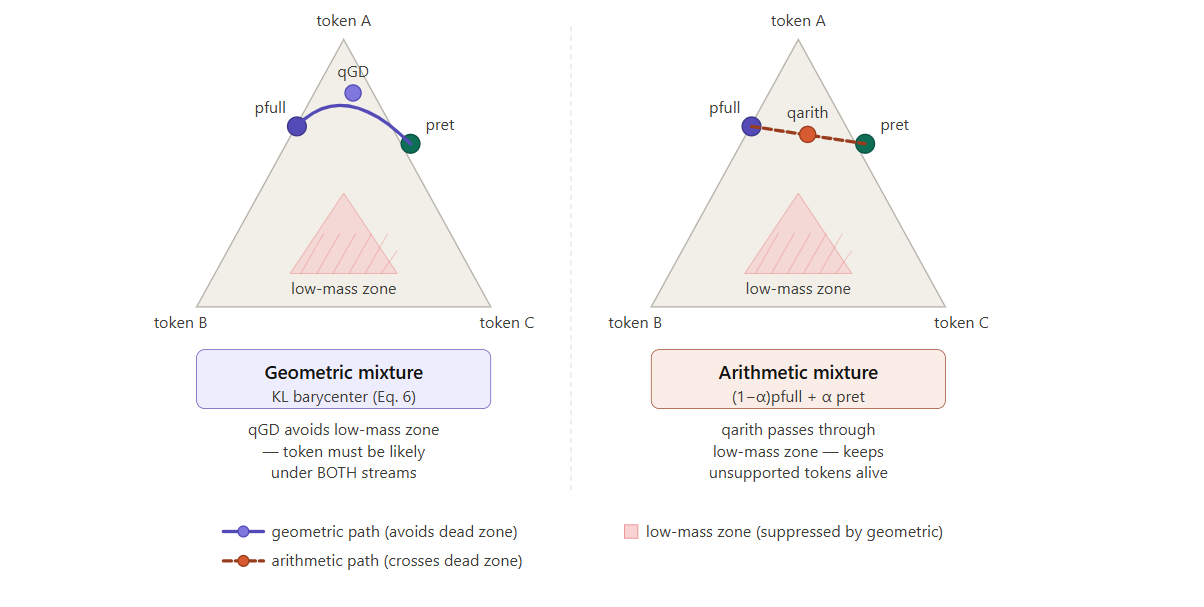}
\caption{Geometric interpretation of Grounded Decoding on the probability simplex. Each vertex corresponds to a deterministic next-token distribution. The geometric KL barycenter traces a curved path along the high-probability ridge between $p_{\text{full}}$ and $p_{\text{ret}}$, ensuring that selected tokens remain plausible under both distributions. In contrast, the arithmetic mixture follows a linear interpolation that passes through low-density regions of the simplex, allowing tokens supported by only one stream to persist. The hatched region denotes low-probability mass under both distributions, where the geometric formulation suppresses candidate tokens while the arithmetic mixture does not.}
\label{fig:grounded_simplex}
\end{figure*}
\begin{figure}[t]
\centering
\includegraphics[width=\linewidth]{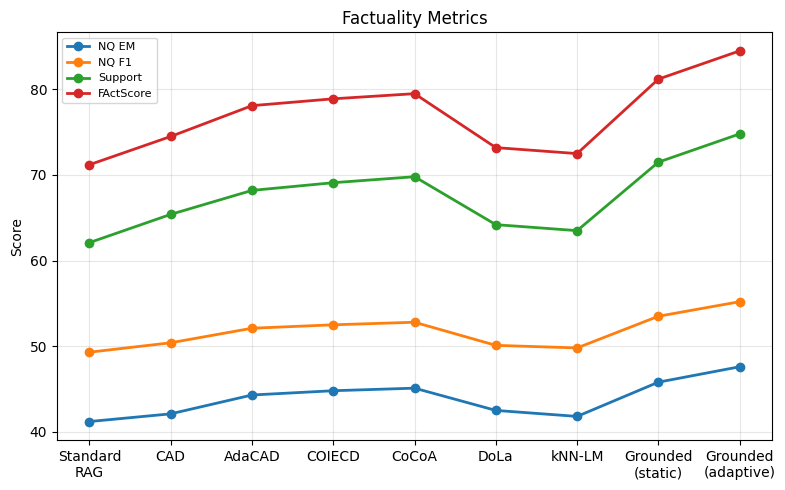}

\caption{Comparison of factuality and QA performance across different retrieval-augmented generation methods. We evaluate Standard RAG, CAD variants, CoCoA, DoLa, kNN-LM, and Grounded Decoding (static and adaptive) on Natural Questions (NQ EM and F1), Support verification accuracy, and FActScore. Grounded (adaptive) consistently achieves the best performance across all metrics, demonstrating improved factual consistency and answer quality over both retrieval-only and decoding-time baselines.}

\label{fig:factuality_metrics}
\end{figure}
\begin{figure}[t]
\centering
\includegraphics[width=\linewidth]{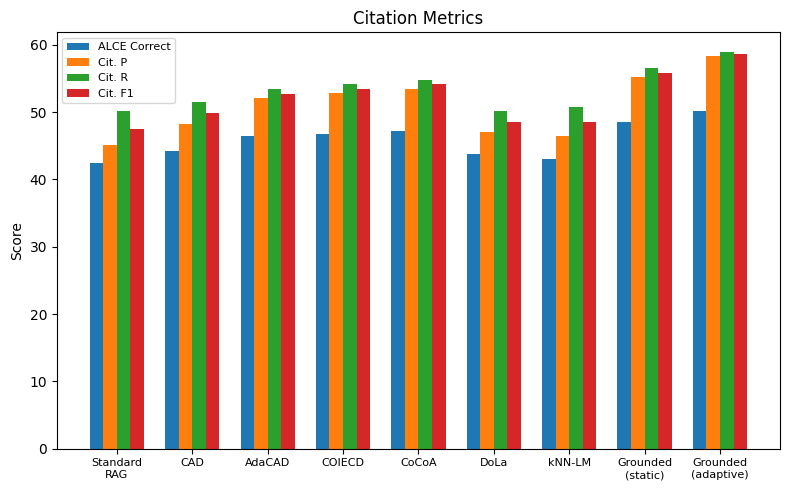}

\caption{Citation quality evaluation across different retrieval-augmented generation methods. We report ALCE correctness, citation precision (Cit. P), recall (Cit. R), and F1 score across Standard RAG, CAD variants, CoCoA, DoLa, kNN-LM, and Grounded Decoding (static and adaptive). Grounded (adaptive) achieves the highest performance across all citation metrics, indicating improved grounding quality, better attribution precision, and more reliable factual referencing compared to prior RAG and decoding-time baselines.}

\label{fig:citation_metrics}
\end{figure}

\begin{figure}[t]
\centering
\includegraphics[width=\linewidth]{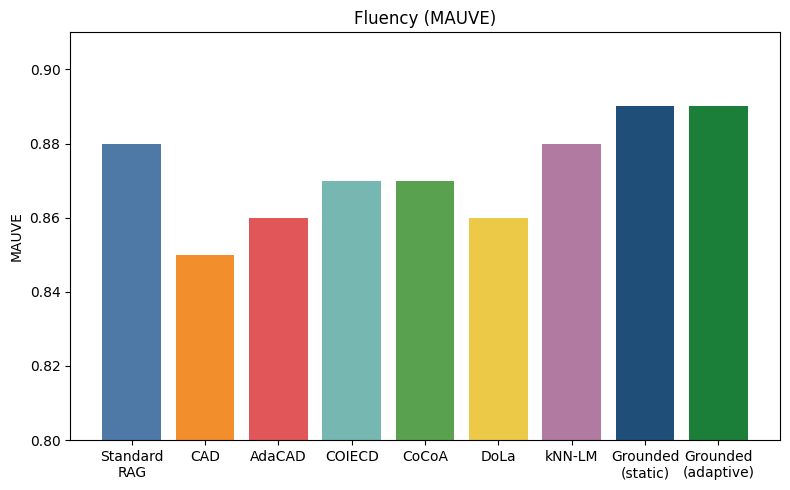}

\caption{Fluency evaluation across different retrieval-augmented generation methods measured using MAUVE scores. We compare Standard RAG, CAD variants, CoCoA, DoLa, kNN-LM, and Grounded Decoding (static and adaptive). Grounded (adaptive) achieves the highest MAUVE score, indicating improved fluency and naturalness while maintaining strong factual grounding compared to prior retrieval-augmented and decoding-time baselines.}

\label{fig:mauve_metrics}
\end{figure}

\begin{figure}[t]
\centering
\includegraphics[width=\linewidth]{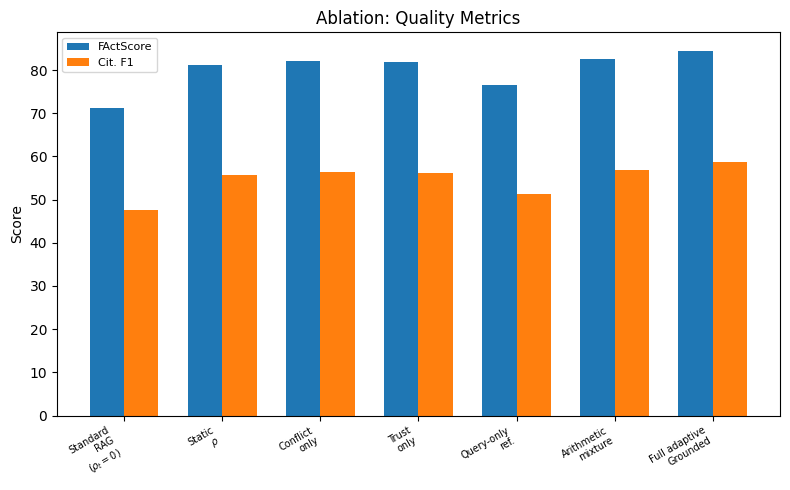}

\caption{Ablation study on quality metrics comparing different variants of the proposed method. We evaluate Standard RAG ($\rho_t=0$), Static $\rho$, Conflict-only, Trust-only, Query-only reference, arithmetic mixture, and the full adaptive Grounded Decoding model. We report FActScore and citation F1. The full adaptive model consistently achieves the best performance, demonstrating the effectiveness of jointly modeling trust and conflict signals in grounded decoding.}

\label{fig:ablation_quality_metrics}
\end{figure}

\begin{figure}[t]
\centering
\includegraphics[width=\linewidth]{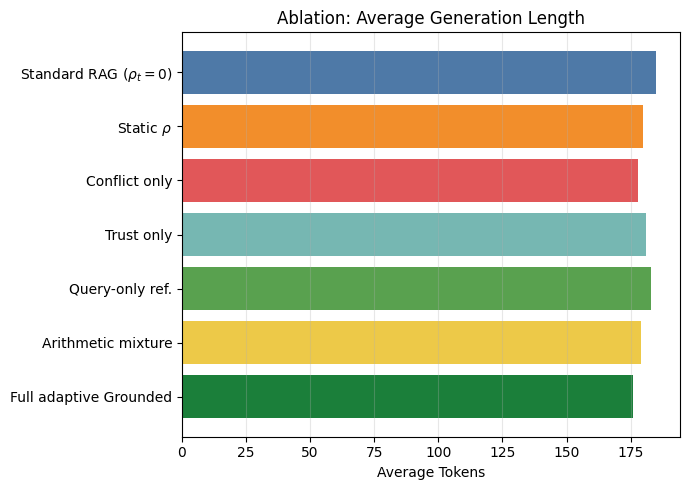}

\caption{Ablation study on generation length measured by average number of tokens across different variants of the proposed method. We evaluate Standard RAG ($\rho_t=0$), Static $\rho$, Conflict-only, Trust-only, Query-only reference, arithmetic mixture, and the full adaptive Grounded Decoding model. The full adaptive model generates more concise outputs compared to most ablations while maintaining strong performance, indicating improved efficiency without sacrificing grounding quality.}

\label{fig:ablation_avg_tokens}
\end{figure}

\begin{figure}[t]
\centering
\includegraphics[width=\linewidth]{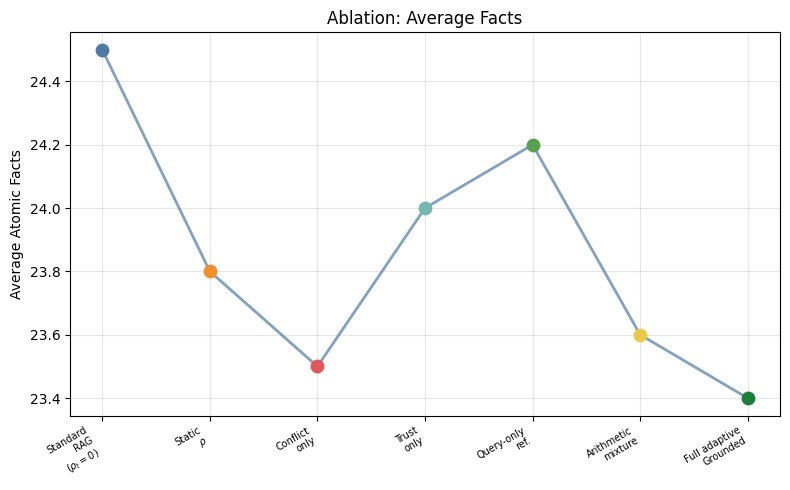}

\caption{Ablation study on factual content measured by the average number of atomic facts per generated response across different variants of the proposed method. We evaluate Standard RAG ($\rho_t=0$), Static $\rho$, Conflict-only, Trust-only, Query-only reference, arithmetic mixture, and the full adaptive Grounded Decoding model. The full adaptive model achieves the lowest number of hallucinated or unnecessary atomic facts while maintaining strong task performance, indicating improved factual conciseness and better grounded generation.}

\label{fig:ablation_avg_facts}
\end{figure}
\begin{figure}[t]
\centering
\includegraphics[width=0.95\columnwidth]{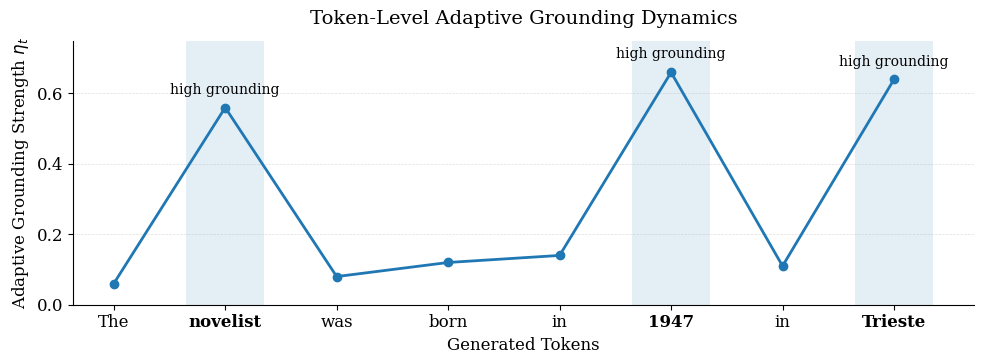}
\caption{Token-level adaptive grounding dynamics for a representative FActScore example. The grounding strength $\eta_t$ remains low for function words such as ``The'' and ``was'', while it increases sharply at factual spans including ``novelist'', ``1947'', and ``Trieste''. This behavior indicates that adaptive grounding selectively activates on content-bearing tokens that are more prone to factual errors, while remaining inactive on non-factual linguistic structure.}
\label{fig:adaptive_grounding_trajectory}
\end{figure}

\begin{figure}[t]
\centering
\includegraphics[width=\linewidth]{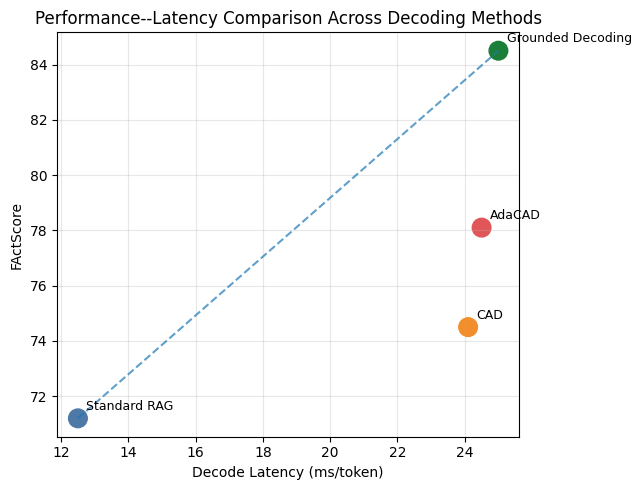}

\caption{Performance--Latency Comparison across decoding methods. We compare Standard RAG, CAD, AdaCAD, and Grounded Decoding in terms of decoding latency (ms/token) versus FActScore. While CAD and AdaCAD incur higher latency without substantial gains in factual accuracy, Grounded Decoding achieves the best trade-off, significantly improving FActScore while maintaining comparable computational cost to other advanced decoding methods.}

\label{fig:performance_latency_comparison}
\end{figure}

\begin{figure*}[t]
\centering
\includegraphics[width=\textwidth]{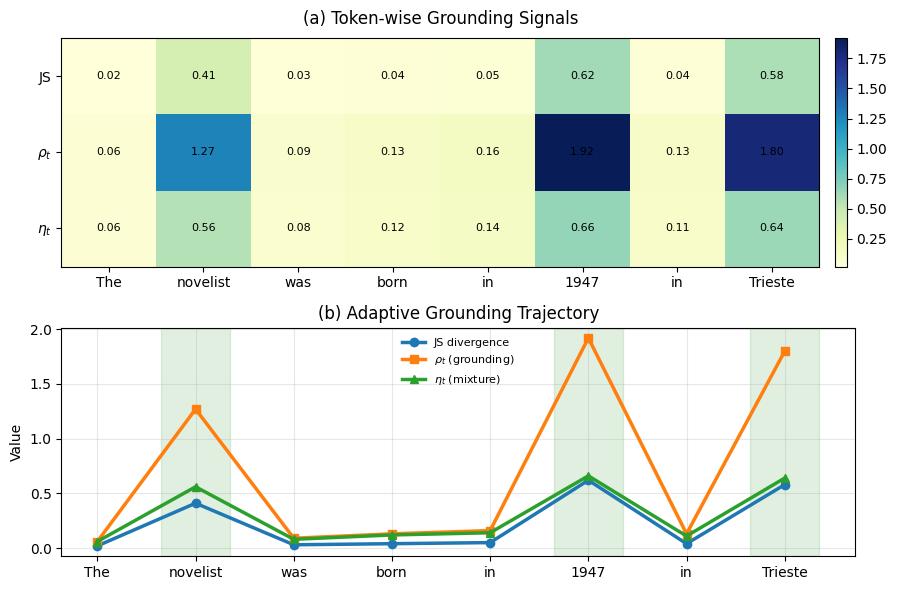}

\caption{Token-level grounding analysis of adaptive decoding. (a) The heatmap shows token-wise grounding signals including Jensen–Shannon divergence (JS), grounding weight $\rho_t$, and mixture coefficient $\eta_t$ across tokens in a generated sentence. Higher values are observed for factual tokens such as dates and named entities, indicating stronger grounding. (b) The line plot illustrates the adaptive grounding trajectory across tokens, showing how JS divergence, $\rho_t$, and $\eta_t$ evolve during generation. The model dynamically increases reliance on retrieved evidence for factual tokens while relying more on parametric generation for non-factual tokens, enabling fine-grained control of factual consistency during decoding.}

\label{fig:token_level_grounding}
\end{figure*}
\end{document}